\documentclass[preprint,review,10pt]{elsarticle}
\usepackage[utf8]{inputenc}
\usepackage{algpseudocode}
\usepackage{algorithm}
\usepackage{amsmath}
\usepackage{amssymb,textcomp,textcomp,amsthm}
\usepackage{graphicx}
\usepackage{geometry}
\usepackage{dsfont}
\usepackage[]{hyperref}
\usepackage{float}
\usepackage[lofdepth,lotdepth]{subfig}
\usepackage{tikz}
\usepackage{soul}
\usepackage[modulo]{lineno}

\usepackage{mathtools}

\DeclarePairedDelimiter\floor{\lfloor}{\rfloor}

\newcommand{\nop}[1]{}

\newtheorem{theorem}{Theorem}

\newtheorem{proposition}[theorem]{Proposition}

\newtheorem{definition}[theorem]{Definition}

\usetikzlibrary{arrows}

\tikzset{
  treenode/.style = {align=center, inner sep=0pt, text centered,
    font=\sffamily},
  arn_n/.style = {treenode, circle, white, font=\sffamily\bfseries, draw=black,
    fill=black, text width=1.5em},
  arn_r/.style = {treenode, rectangle, red, draw=blue, 
    minimum width=3.5em, minimum height=3.5em},
  arn_x/.style = {treenode, circle,red, draw=blue,
    minimum width=3.5em, minimum height=3.5em}
}

\begin{document}

\begin{frontmatter}

\title{Column generation based math-heuristic for classification trees}


\author{Murat Firat\tnoteref{label1}}
\tnotetext[label1]{{Corresponding author: {\tt m.firat@tue.nl}}, Eindhoven University of Technology, The Netherlands.}
\author{Guillaume Crognier\tnoteref{label2}}
\author{Adriana F. Gabor\tnoteref{label3}}
\author{C.A.J. Hurkens\tnoteref{label4}}
\author{Yingqian Zhang\tnoteref{label5}}
\tnotetext[label2]{\indent {{\tt guillaume.crognier@polytechnique.edu}}. École Polytechnique Paris, France.} 
\tnotetext[label3]{\indent {{\tt adriana.gabor@ku.ac.ae}}. Khalifa University, United Arab Emirates.}
\tnotetext[label4]{\indent {{\tt c.a.j.hurkens@tue.nl}}. Eindhoven University of Technology, The Netherlands.}
\tnotetext[label5]{\indent {{\tt yqzhang@tue.nl}}. Eindhoven University of Technology, The Netherlands.}

\begin{abstract}
 This paper explores the use of  Column Generation (CG)  techniques in constructing univariate binary decision trees for classification tasks. We propose a novel Integer Linear Programming (ILP) formulation, based on root-to-leaf paths in decision trees.  The model is solved via a Column Generation based heuristic. To speed up the heuristic, we use a restricted instance data by considering a subset of decision splits, sampled from the solutions of  the well-known CART algorithm. Extensive numerical experiments show  that our approach is competitive with  the state-of-the-art ILP-based algorithms. In particular, the proposed approach  is capable of handling big data sets with tens of thousands of data rows.
 Moreover, for large data sets, it finds solutions competitive to 
 CART. 
 
\end{abstract}

\begin{keyword}
Machine Learning \sep Decision trees \sep Column Generation \sep Classification \sep CART \sep Integer Linear Programming
\end{keyword}

\end{frontmatter}

\section{Introduction}\label{introduction}

In classification problems, the goal is to decide the class membership of a set of  observations, by using available  information on features and class membership of a training data set. Decision trees are one of the most popular models for solving this problem,  due to their effectiveness  and high interpretability. They have been applied in a wide range of  applications, from transport planning  \cite{vanriessen2016}, internet advertisements \cite{kim2001}, to  healthcare interventions \cite{linden2018}.  In this work, we focus on constructing univariate binary decision trees of prespecified depth.

In a univariate binary decision tree, each internal node contains a test regarding the value of one single feature of the data set, while the leaves contain the target classes. The problem of constructing (learning) a classification  tree (CTCP), is the problem of finding a set of tests (decision checks), such that  the assignment of target classes to rows satisfies a certain criterion. A commonly encountered objective  is accuracy,  measured as the share of the number of correct predictions in a training set.

As the problem of learning optimal decision trees is an NP-complete problem \cite{HyaRi76}, heuristics such as CART \cite{Breiman84} and ID3 \cite{Quinlan86} are widely used. These greedy algorithms build a tree recursively, starting from a single node. At each internal node, the (locally) best decision split  is chosen  by solving an optimization problem  on a subset of the training data. This process is repeated at  children nodes till some stopping criterion is satisfied.  Although greedy algorithms are computationally efficient, they do not guarantee finding an  optimal tree. In recent years, constructing decision trees by using mathematical programming techniques, especially Integer Optimization, became a hot topic among researchers (see \citet{Menick16}, \citet{VerZhaYe17}, \citet{BertDunn17}, \citet{VerZha17}, and \citet{Dash18}). 

In this paper, our contribution is threefold. First, we propose a novel ILP formulation for constructing univariate classification trees and solve it through a Column Generation based heuristic. Secondly, we show that by using only a subset  of the feature checks (decision splits), solutions of good quality can be obtained within short computation time. Thirdly, we provide ILP solutions to problems involving large data sets, that have not been previously tackled via optimization techniques. 

As a result, we can construct classification trees 
that achieve $2.4\%$ higher prediction accuracy on average  compared to the 
approach of \citet{BertDunn17}, in shorter computation time (10 minutes). Although our approach is slightly outperformed by the method proposed by \citet{VerZha19} by $0.3\%$ on average, it remains competitive in testing results.
At the same time, our approach is capable of handling much larger datasets than all state-of-the-art ILP-based methods in the literature.

This paper is organized as follows. Section \ref{literature} revises  the existing literature and discusses the state-of-art algorithms in constructing decision trees. Our basic notation and important concepts related to decision trees are introduced in Section \ref{Sec:Preliminaries}. Sections \ref{Sec:Const}  and \ref{sec:CG procedure} present  the mathematical models and  our solution approach. Section \ref{Sec:CompExpr} reports the experimental results  obtained with our method and compares them to  recent results in the literature. Finally, our conclusions and further research directions are discussed in Section \ref{Sec:conclusion}.

\section{Related work}\label{literature}

\paragraph{Decision trees}
Finding optimal decision trees is known to be NP-hard (\citet{HyaRi76}). This led to the development of  heuristics that run in short time and output reasonably good solutions. An important limitation in constructing decision trees is that the decision splits at internal nodes do not contain any information regarding the quality of the solution, e.g. partial solution or lower bounds on the objective. This results in lack of guidance for constructive algorithms \cite{Breiman84}. To alleviate  this shortcoming,  greedy algorithms use goodness measures for making the (local) split decisions. The most common measures are Gini Index, used  by CART (\citet{Breiman84}), and Information Gain, used by ID3 (\citet{Quinlan86}). In order to increase the generalization power of a decision tree, a pruning post-processing step is usually applied after a greedy construction. 

\citet{Norton89} proposed adding a look-ahead procedure to the greedy heuristics, however no significant improvements are reported \cite{Murthy95}. Other optimization techniques used in the literature to find decision trees are  integer linear programming (shortly ILP), dynamic programming \cite{Payne77}, and stochastic gradient descent based methods \cite{Norouzi15}.

\paragraph{ILP-based approaches for constructing decision trees}
Several ILP approaches have been recently proposed in the literature.  
\citet{BertDunn17} study constructing optimal classification trees with both univariate and multivariate decision splits. The authors do not assume a fixed tree topology, but control the topology through a tuned regularization parameter  in the objective.  As the magnitude of this  parameter increases, more leaf nodes may have no samples routed to them, resulting in  shallower trees.  An improvement of 1-2\% in accuracy w.r.t CART is obtained for out-of-sample data for univariate test and an improvement of 3-5\% for multivariate tests.  

By exploiting the discrete nature of the data, \citet{Gunluk18} propose an efficient MILP formulation for the problem of constructing classification trees for data with categorical  features. At each node, decisions can be taken based on a subset of  features (combinatorial checks).  The number of integer variables in the obtained MILP is independent of the size of the training data.  Besides the class estimations to the leaf nodes, a fixed tree topology  is given as input to the ILP model. Four candidate  topologies are considered, from which one is eventually chosen after  a cross validation.   Numerical experiments indicate that, when classification can be achieved via a small interpretable tree, their algorithm outperforms CART regarding accuracy.

In another recent study, \citet{Dash18} propose an ILP model for learning Boolean decision rules  in disjunctive normal form (DNF, OR-of-ANDs, equivalent to decision rule sets) or conjunctive normal form (CNF, AND-of-ORs) as an interpretable model for classification.  The proposed ILP takes into account the trade-off between accuracy and the simplicity  of the chosen rules and is solved via the column generation method.  The authors formulate an approximate pricing problem by randomly selecting a limited number of features and data instances.  Computational experiments show that this Columns Generation (CG) procedure  is highly competitive to other state-of-the-art algorithms. 
 
Our ILP builds on the ideas in  \citet{VerZha17}, where an efficient encoding is proposed for constructing both classification and regression (binary) trees of univariate splits of depth $k$. As a result,  the number of decision variables  in their ILP  is reduced to $O(|R|k)$, compared to  $O(|R|2^{k})$ variables used in  \citet{BertDunn17}, where $|R|$ is the number of rows in the considered dataset.  Preliminary results indicate that the method used in \citet{VerZha17} obtains good results on trees  up to depth 5 and smaller data sets of size up to 1000. Recently, \citet{VerZha19} 
formulate the optimal decision tree learning problem of a given depth as a binary linear program. The method is called BinOCT. They show that BinOCT outperforms the existing approaches of \cite{VerZha17, BertDunn17} on a variety of data sets in terms of accuracy and computation time.  Although BinOCT speeds up learning, the largest dataset that the authors show the method can achieve reasonable results within 10 minutes is still rather small, containing no more than 5000 rows. 

\citet{VerZha17} demonstrate the advantages of using a MILP model for learning decision trees, by showing that discrimination-aware classification trees and trees with minimised  false positive/negative errors can be easily obtained by modifying the objective function of the MILP model. However, the state-of-the-art MILP based approaches, i.e. \cite{VerZha17, VerZha19, BertDunn17}, are only shown to be effective on small datasets with less than 5000 rows. We fill this gap in this paper, and use  \citet{VerZha19} and \citet{BertDunn17} as the main reference for benchmarking our method (see Section \ref{Sec:CompExpr}).

\paragraph{Column Generation} Column generation (CG) is a widely used technique for solving large scale Linear Programs, usually with exponential number of variables. Initially proposed by  \citet{Ford58} for a  multicommodity network flow problem, it has been widely used in the last decades for a variety of ILP problems such as  cutting stock \citet{Gilmore61}, vehicle routing \citet{Desrosiers84},  airline crew scheduling \citet{Bornd06}, employee scheduling \citet{Al-Yakoob08}, workforce assignments in telecommunication \citet{Firat16}, and many others. We will use CG in this paper to find a good solution for the proposed ILP.

\section{Preliminaries}\label{Sec:Preliminaries} 

 In this section we describe the basic concepts of our work and introduce the necessary notation.
 
The goal of a classification problem is to partition data instances in classes labeled by a set of targets $T$.  We assume that a data instance is given as a row $r$ containing a value $v_f^r$ for every feature $f$ in the set $F$. Without loss of generality, we only consider numerical features.  Ordinal and categorical features can be transformed into numerical ones, by using natural numbers for ordinal and one hot encoding for categorical data. 

Let $BT$ be a full binary tree of depth $k$ and let $N_{int}$ and $N_{lf}$  be the internal and leaf nodes of $BT$ respectively. A path from root to a leaf $l\in N_{lf}$ will be denoted by $p_{BT}(l)=(n(0),..., n(k-1),l)$, where $n(h)$ is a node at level $h$ in $BT$. Let $S$ be a set of \textit{univariate decision splits}, that is,  pairs  $(f,v_f\leq \mu )$, with $f\in F$, $v_f \in \mathbb{R}$ and $\mu \in \mathbb{R}$. $v_f$ is called the \textit{value of feature f}, while $\mu$ is called a \textit{threshold}.  

\begin{definition}A binary decision tree $DT$ of depth $k$ can be viewed as a triple $DT=(BT, s, \hat{t})$, where
$s:N_{int}\mapsto S$ assigns to each node a split, and  $\hat{t}:N_{lf}\mapsto T$ assigns a target to each leaf. \end{definition}

Consider a given decision tree $DT$. When classifying a data instance (row) $r$ with the help of $DT$, if  the split at node $j$ is $s(j)=(v_f\leq \mu)$ and  the value of feature $f$ for row $r$ satisfies $(v_f^r\leq \mu)$, the data row will be directed to the left child of $j$; if $v_f^r> \mu$, it will be directed to the right child. In this way, in the classification process, each data row is directed from the root node $n(0)$ to one of the leaves in $N_{lf}$ through a sequence of splits. If a data row $r$ ends in leaf $l$, $r$ belongs to the class labeled by target $\hat{t}(l)$. 

Consider now a row $r$ for which the target $t_r$ is known. We say that row $r$ is \textit{correctly predicted} by the decision tree $DT$ if the following two conditions are satisfied:\\
(i) $r$ is directed to $l$ through the splits of $DT$\\
(ii) $t_r=\hat{t}(l)$, where $\hat{t}(l)$ is the target at leaf node $l$. 

In the classification tree construction problem, shortly $k$-CTCP, considered in this paper, we are given a data set $R$ with known targets, a binary tree $BT$ of depth $k$, a set of decision splits $S_j$ for each internal node $j\in N_{int}$ and a set of targets $T$. The sets $S_j$ are usually determined based on the features of the rows in $R$.

\begin{definition}\label{def:decpath}
Let $p_{BT}(l)=(n(0),..., n(k-1),l)$ be the path from the root node to a leaf $l$ in $BT$. A sequence $p(l)=(s(n(0)),..., s(n(k-1)), \hat{t}(l))$ is called a \textit{decision path} from root to leaf $l$, where $s(n(h))$ is the split associated to the node on level $h$ on $p_{BT}(l)=(n(0),...., n(k-1),l)$. Decision path $p(l)$ should satisfy the following three conditions: \\
(i) $s(n(h))\in S_{n(h)}$\\
(ii) $s(n(h))\neq s(n(h'))$, for $h,h'\in \{0,..., k-1\}$, $h\neq h'$. \\
(iii) $\hat{t}(l)\in T$ 

\end{definition}

For the easy of explanation, we will sometimes see $p_{BT}(l)$ as a collection of nodes. For two paths $p_{BT}(l)$ and $p_{BT}(l')$ in $BT$, $p_{BT}(l)\cap p_{BT}(l')$ will represent the intersection of the set of nodes on each of the paths. A similar convention will be used for decision paths.

Finally, for a given data set $R$ with known targets and decision tree $DT$, we denote the number of correct (true)  predictions at leaf $l$ by $\mbox{CP}(p(l))$,

The goal of the $k$-CTCP is to find the assignments  $s:N_{int}\mapsto S$ and $\hat{t}:N_{lf}\mapsto T$ such that $s(j)\in S_j$ and $\sum_{l\in N_{lf}}CP(p(l))$ is maximized. 

Observe that for a given assignment $s$, the  optimal prediction target class at $l$ is in $Argmax_t \{|\{r\in R^l(s): t_r=t\}|\}$, where $R^l(s)$ is the set of rows directed to leaf $l$  under the split assignment $s$. Hence, for each assignment $s$,  an  optimal set of targets can be easily calculated.  Moreover, as the tree is constructed based on the set $R$, we can assume w.l.o.g. that $ \mu \in \{v^{r}_{f}|r \in R\})$, for every threshold $\mu$. 

Let $S=\{(f,v_f^r\leq \mu), r\in R \text{ and }\mu \in \{v^{r}_{f}|r \in R\}\}$ be the set of all possible splits. We will call the ($k$-CTCP) problem with $S_j=S$, for each $j\in N_{int}$ the \textit{full information ($k$-CTCP) problem}. If there exists an internal node $j \in N_{int}$ such that $S_{j} \subset S$, then the problem is said to be \textit{restricted}. 

The problem definition above can be implemented as an integer program, by assigning splits to nodes, however, at the expense of a large computational time (see \cite{BertDunn17}).  In this paper, we propose an alternative formulation based on assigning \emph{decision paths} to leaves, which allows the use of Column Generation technique. We will see in subsequent sections that this formulation allows us to apply several acceleration procedures to keep the computation time short, enabling the construction of decision trees for larger data sets. 

Note that while there is a unique path from the root to a leaf $l$  in $BT$, there are many possible  decision paths from the root to $l$, depending on the splits assigned to the internal  nodes.  For $l\in N_{lf}$, let $DP_l$ be the set of decision paths from the root to leaf $l$. 

We say that two decision paths $p(l)$ and $p'(l')$ \textit{agree} with each other, if $s(j)=s'(j)$ for each $j\in p_{BT}(l) \cap p_{BT}(l')$. In the following we give the formal definition of the $k$-CTCP.

\begin{quote}
	{\sc Problem:} {\sc Depth-$k$  Classification Tree Construction Problem ($k$-CTCP)  }
	\\[1.0ex]
	{\sc Instance:} A set of rows $R$ with feature F and target classes T, a binary tree $BT$ of depth $k$, a set of decision paths $DP_l$, $l\in N_{lf}$ of univariate decision splits.
	\\[1.0ex]
	
	{\sc Question:}
Find an assignment $\tilde{p}:N_{lf} \mapsto \{DP_l: l\in N_{lf}\}$ such that $\tilde{p}(l)\in DP_l$, $\tilde{p}(l)$ and $\tilde{p}(l')$ agree with each other for all $l,l' \in N_{lf}$, and $\sum_{l\in N_{lf}}CP(\tilde{p}(l))$ is maximized.
\end{quote}

\section{Column generation based solution procedure}\label{Sec:Const}

Several recent papers propose MILP models for constructing univariate classification trees (\citet{BertDunn17} , \citet{Gunluk18}, \citet{Dash18}, \citet{VerZha17}).  However, these methods only report solutions for  data instances of less than 10000 rows within the predefined time limits (10 min-2 hours). Designing a MIP- based method for tackling large datasets is still a challenge. 

In this paper, we reformulate the aforementioned tree learning problem to employ Column Generation (CG) technique aiming to solve big data instances in reasonable times. Employing decomposition techniques such as CG is a commonly used technique for finding solutions to large MILP models. As in many CG formulations,  the decision variables are exponentially many, however the number of constraints is small and therefore only a relatively small number of all columns is expected to be  enumerated to solve the LP model optimally. 

Our reformulation is based on the decision paths defined in the previous section . We formulate the master ILP of the ($k$-CTCP) in  Section \ref{sec:MasterILP}.  As we will see in Section \ref{Sec:CompExpr}, this reformulation is quite strong, leading to an integer solution in many instances.

As it is usual in a CG approach, the master problem and the pricing problem are solved iteratively, where the former passes to the latter the dual variables in order to find promising columns (here decision paths), i.e. having positive reduced costs. The decision variables corresponding to the promising decision paths are added to the master LP model to improve the objective. The optimality of the master model is proven when no path with positive reduced cost exists. We refer to \citet{Desrosiers05} for more details of CG technique.

We derive the corresponding pricing subproblem in Section \ref{sec:pricing}. In Section \ref{sec:pricing} we show that a brute force algorithm can solve the pricing problem in polynomial time for a tree of depth $k$. However, the high running time of this algorithm makes it unsuitable for use  in a CG procedure. Instead, we solve the pricing problem by a  randomized heuristic  and we resort to solving a MILP formulation of the pricing problem  when needed. In Section \ref{sec:integersol} we explain how the final integer solutions are obtained after performing the CG procedure. 

\subsection{Master problem formulation}\label{sec:MasterILP}
The master model chooses a collection of agreeing decision paths that forms a feasible decision tree, as defined in Section \ref{Sec:Preliminaries}. Table \ref{Table:MasterLP} lists the sets, parameters, and decision variables of the master ILP model.
\begin{center}
	\begin{table}[!!ht]
		\footnotesize
		\caption{Sets, parameters, and decision variables for the master model} \label{Table:MasterLP}
		\begin{tabular}{ll }
			\hline
			\multicolumn{2}{l}{\textbf{\emph{Sets}}}
			\\
			$R$ & set of rows in data file, indexed by $r \in R$. \\
			$F$ & set of features in data file, indexed by $f \in F$. \\
			$N_{lf}, N_{int}$ & leaf and internal (non-leaf) nodes in the decision tree, indexed by $l \in N_{lf}, j \in N_{int}$. \\
			$DP_l$ & set of decision paths ending in leaf $l$, indexed by $p \in DP_l$. \\
						$DP_l(j)$ & subset of paths in $DP_l$, such that $j\in p_{BT}(l)$.  \\
			$R^{l}(p) $ & subset of rows directed to leaf $l$ through path $p$. \\
			$S_j$ &  set of decision splits for node  $j$.\\
			\multicolumn{2}{l}{\textbf{\emph{Parameters}}}
			\\
			
			$k$& the depth of the decision tree, levels are indexed by $h = 0,\dots,k-1$. \\
			$\mbox{CP}(p)$& number of correct predictions/true positives for  path $p$. \\
			
			\multicolumn{2}{l}{\textbf{\emph{Decision Variables}}}
			\\
			
			$x_{p}$& binary variable indicating that path $p \in DP_l$ is assigned to leaf $l \in N_{lf}$.\\
			
			$\rho_{j,a}$& binary variable indicating that split $a \in S_j$ has been assigned to node $j\in N_{int}$.\\

			\hline
		\end{tabular}
	\end{table}
\end{center}
The following lines present the master ILP formulation for the ($k$-CTCP). The formulation holds for both full information and restricted versions.
\begin{equation}\label{eq:obj} \mbox{ Maximize } \sum_{l \in  N_{lf} }\sum_{p \in DP_l} \mbox{CP}(p)x_{p} \end{equation}
\begin{center} \mbox{ subject to } \end{center}

\begin{equation}\label{eq:1path} \sum_{p \in DP_{l}}x_{p}= 1, \quad l \in N_{lf}  \end{equation}

\begin{equation}\label{eq:1row} \sum_{l\in N_{lf}}\sum_{p\in DP_l:r \in R^{l }(p)} x_{p}= 1, \quad r \in R  \end{equation}

\begin{equation}\label{eq:consist} \sum_{\substack{p \in DP_l(j):\\ s(j)=a }} x_{p}= \rho_{j,a}, \quad l \in N_{lf}, j\in p_{BT}(l)\cap N_{int} \text{ and } a \in S_j  \end{equation}

\begin{equation}\label{eq:integrality_x} x_{p} \in\{0,1\},\quad  \ p \in DP_l, \;l \in N_{lf}  \end{equation}
\begin{equation}\label{eq:integrality_rho}\rho_{j,a}\in \{0,1\}\quad j\in N_{int}, a \in S_j  \end{equation}

The objective function (\ref{eq:obj}) maximizes the number of rows correctly predicted, that is, the  accuracy of the decision tree. Constraint (\ref{eq:1path}) imposes  that exactly one path has to be selected for each leaf. Constraint (\ref{eq:1row}) ensures that each row  is directed to  exactly one leaf. Constraint (\ref{eq:consist}) ensures that all chosen paths agree with each other which means that all decision paths passing a particular internal node have the same decision split at that node. 

In order to employ the CG technique, we work with the relaxation of the master ILP model, in which constraints (\ref{eq:integrality_x}) are relaxed as $0 \leq x_p\leq 1$. Since constraints (\ref{eq:1path}) guarantee that every variable $x_p$ is bounded by $1$ from above, constraints (\ref{eq:integrality_x}) can be relaxed to $x_p\geq 0$.  Note that there is no need to impose any bounds on   $\rho_a$ , as  $\rho_a \geq 0 $ follows from  (\ref{eq:consist}) and the non-negativity of $x_p$, while   $\rho_{ja} \leq 1 $ follows from  the fact that   the sum in the left hand side in (\ref{eq:consist})  is bounded by 1, as a consequence of (\ref{eq:1path}).

\subsection{Pricing subproblem}\label{sec:pricing}

In this section we formulate the pricing problem and show that it  can be solved by a polynomial time algorithm. However, the running time of this algorithm is too high for the use in a CG approach. We then formulate the pricing problem as a MILP, which will be used in the heuristic proposed in subsequent section.

We associate the dual variables  $\alpha_{l}, \beta_{r}$, and $\gamma_{l,j,a}$ with constraints (\ref{eq:1path})-(\ref{eq:consist}) respectively. Given that the number of paths in the sets $DP_{l}$, $l\in N_{lf}$  are exponentially many, these sets will not be fully enumerated. Instead, we only find the paths in $DP_l$ that are promising for increasing the objective value. A path is said to be promising if it has a positive associated reduced cost which is defined as

\begin{equation}\label{redcost}
\overline{\mbox{CP}}(p) = \mbox{CP}(p) - \alpha_{l} - \sum_{\substack{ (j,s(j)): j \in p_{BT}(l)\cap N_{int}}} \gamma_{{l,j,s(j)}} - \sum_{r \in R^l(p)} \beta_r,\quad p \in DP_{l}, \; l \in N_{lf}
\end{equation}

The pricing problem can be defined as finding  $ z_{Pr}^{*} = \max_{p}\{\overline{\mbox{CP}}(p)\}$. 

 It can be easily seen that  the problem of finding $z_{Pr}^{*}$ can be decomposed into solving $|L||T|$ optimization sub-problems, each of them corresponding to a pair $(l,t)\in L\times T$. More precisely, 
 $$z_{Pr}^{*} =\max_{(l,t)\in L\times T} z^*_{Pr}(l,t),$$
 where $z^*_{Pr}(l,t)=\max_{p\in D_l^t}\{\overline{\mbox{CP}}(p)\},$
and  $D_l^t$ is the set of paths in $D_l$ with target $t$ associated to leaf $l$. 

\subsubsection{Complexity } \begin{proposition} \label{prop:pricing} For any pair $(l,t) \in L\times T$, $z^*_{Pr}(l,t)$ can be found in polynomial time.
\end{proposition}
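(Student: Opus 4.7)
The plan is to reduce the pricing subproblem, once $l$ and $t$ are fixed, to the choice of a split at each of the $k$ internal nodes on the unique tree-path $p_{BT}(l)=(n(0),\dots,n(k-1),l)$, and then to observe that this choice set has polynomial size in the input as long as the depth $k$ is treated as a constant of the problem.

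First I would unpack what a path $p \in DP_l^t$ is. By Definition \ref{def:decpath}, such a path is completely determined by the $(k+1)$-tuple $\bigl(s(n(0)),\dots,s(n(k-1)),t\bigr)$ with $s(n(h)) \in S_{n(h)}$ and the pairwise distinctness condition (ii); the target component is fixed to $t$. In particular, given a tuple of splits, the set $R^l(p)$ of rows routed to $l$ is obtained by a single top-down sweep over $R$, and from it one reads off $\mathrm{CP}(p)=|\{r\in R^l(p): t_r=t\}|$ and $\sum_{r\in R^l(p)}\beta_r$. The $\gamma$-contribution in \eqref{redcost} decomposes as $\sum_{h=0}^{k-1}\gamma_{l,n(h),s(n(h))}$ and is read off directly from the chosen tuple. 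Consequently the reduced cost $\overline{\mathrm{CP}}(p)$ is evaluable as a function of the tuple alone, with no further combinatorial structure.

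Second, I would give the brute-force algorithm: enumerate every tuple $(s_0,\dots,s_{k-1}) \in \prod_{h=0}^{k-1} S_{n(h)}$, discard those violating distinctness, and for each surviving tuple compute $\overline{\mathrm{CP}}$ in $O(k|R|)$ time as above. Returning the maximum yields $z^*_{Pr}(l,t)$. The total running time is bounded by $O\!\bigl(k\,|R|\,\prod_{h}|S_{n(h)}|\bigr) = O(k\,|R|\,|S|^k)$. Since $|S|$ is at most $|F|\cdot|R|$ (each split is a feature/threshold pair with threshold drawn from observed values), and since $k$ is a fixed input parameter, this bound is polynomial in the input size $|R|,|F|$.

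The main subtle point, rather than a genuine obstacle, is the role of $k$: the argument exploits that $k$ is a constant of the instance, so the $|S|^k$ blow-up is harmless for the complexity claim. This very factor is also what the authors point to as making the brute-force routine impractical inside a column generation loop, which is precisely what motivates the MILP-based pricing formulation and the randomized heuristic discussed in the remainder of Section \ref{sec:pricing}.
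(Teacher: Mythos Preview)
Your proposal is correct and follows essentially the same approach as the paper: both argue by brute-force enumeration of the at most $\prod_{h}|S_{n(h)}|=O(|R|^k|F|^k)$ split tuples, evaluate the reduced cost of each in $O(k|R|)$ time, and conclude polynomiality for fixed $k$. Your write-up is in fact slightly more careful than the paper's, as you explicitly address the distinctness condition and the decomposition of $\overline{\mathrm{CP}}(p)$ into its $\alpha$-, $\beta$-, and $\gamma$-components.
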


\begin{proof}
The number of decision paths in $D_l^t$ is $\prod_{j\in p_{BT}(l)}|S_j|=O(|R|^{k}|F|^{k})$,as $|S_j| \leq |R||F|$. For a given decision path $p\in DP_l^t$, computing $R_l$ takes $k|R|$ operations, while computing the reduced costs takes $|R|$ operations. Hence, for fixed $k$, the running time of the algorithm is $O((k+1)|R|^{k+1}|F|^{k})$, which for a fixed $k$ is polynomial in the instance size.  
\end{proof}

Proposition \ref{prop:pricing} implies that $z_{Pr}^{*}$ can be found in polynomial time. However, the running time of the polynomial algorithm is high, which renders it inadequate for a repeated use in a CG procedure. 

In Appendix \ref{app:Complexity}  we show that for an arbitrary $k$, the decision path construction problem  is NP-hard. 

\subsubsection{MILP formulation of the pricing problem}\label{MIP-pricing} 

Next we reformulate the problem of finding $z^*_{Pr}(l,t)$  as a MILP.  The necessary notation is presented in Table \ref{Table:Pricing}.\\

By noting that  for $p\in D_l^t$, the number of correct prediction can be rewritten as
\begin{equation}CP(p) = \sum_{r \in R_{t}} y_{r}.  \end{equation}
the problem of finding $z_{Pr}^{*}(l,t)$ becomes:
\begin{center}
	\begin{table}[!!ht]
		\footnotesize
		\caption{Sets, parameters, and decision variables for the pricing} \label{Table:Pricing}
		\begin{tabular}{ll }
			\hline
			\multicolumn{2}{l}{\textbf{\emph{Sets}}}\\
			
			$R_{t}$ & set of rows in data file with target $t$, indexed by $r \in R_t$. \\
			$F$ & set of features in data file, indexed by $f \in F$. \\
			$S_j$ &  set of decision splits for node  $j$.\\
            $LC(l)$& set of nodes in $N_{int}$ with left child in $p_{BT}(l)$\\
	           $RC(l)$& set of nodes in $N_{int}$ with right child in $p_{BT}(l)$\\
	           $T(r)$& set of splits for which row $r$ returns a TRUE:  $ \{a=(f,v_f \leq \mu) \in S_{RC(l)}\cup S_{LC(l)} : v_f^r \leq \mu\}$ \\
	           $F(r)$& set of splits for which row $r$ returns a FALSE:  $ \{a=(f,v_f \leq \mu) \in S_{RC(l)}\cup S_{LC(l)} : v_f^r > \mu\}$ \\
			
			\multicolumn{2}{l}{\textbf{\emph{Parameters}}}\\
			$v^{f}_r$ & value of feature $f$ in row $r$. \\
			
			\multicolumn{2}{l}{\textbf{\emph{Decision Variables}}}\\
			$y_{r}$&binary variable indicating that row $r \in R$ reaches leaf $l$. \\
			$u_{j,a}$&binary variable indicating that split $a\in S_j$ is assigned to node $j\in RC(l)\cup LC(l)$*. \\
			
			\hline
	\multicolumn{2}{l}{* For path $p_{BT}(l)$ we have $RC(l)\cup LC(l) = p_{BT}(l) \cap N_{int}$.}\\
		
		\end{tabular}
	\end{table}
\end{center}
\begin{equation}\label{eq:objpricing} DPP(l,t): \quad\quad \mbox{ Max } \sum_{r \in R_t} y_{r} - \alpha_{l} - \sum_{j \in p_{BT}(l) }\sum_{a \in S_j } \gamma_{l,j,a} u_{j,a} - \sum_{r \in R} \beta_{r} y_{r} \end{equation}
\begin{center}\mbox{ subject to }\end{center} 

\begin{equation}\label{eq:1h} 
\sum_{\substack{a \in S(j) }} u_{j,a} = 1, \quad j \in RC(l)\cup LC(l)
\end{equation}

\begin{equation}\label{eq:leftp} 
y_{r} \leq \sum_{a \in S_j \cap T(r)} u_{j,a}, \quad   j \in LC(l),  r \in R
\end{equation}

\begin{equation}\label{eq:rightp} 
y_{r} \leq \sum_{a \in S_j\cap F(r)} u_{j,a}, \quad  j \in RC(l),   r \in R
\end{equation}

\begin{equation}\label{eq:allp} 
\sum_{j \in LC(l)}\sum_{ a\in S_j\cap T(r)}   u_{j,a}  + \sum_{j \in RC(l)}\sum_{ a\in S_j\cap F(r)}  u_{j,a} - (k-1) \leq y_{r} \quad r \in R
\end{equation}

\begin{equation}\label{distinct} 
\sum_{\substack{ j \in p_{BT}(l) }} u_{ja} \leq 1, \quad a \in \cup_{j\in RC(l)\cup LC(l)}S_j
\end{equation}

\begin{equation}\label{y_indicator}
y_{r}\in  \{0,1\}, \quad r \in R
\end{equation}

\begin{equation}\label{prbounds2}
u_{j,a} \in \{0,1\}, \quad j\in RC(l)\cup LC(l), a\in S_j
\end{equation}

Objective (\ref{eq:objpricing}) aims to maximize the reduced cost associated to a feasible decision path. Constraint  (\ref{eq:1h}) ensures  that exactly one decision split has to be performed at each level. Constraints (\ref{eq:leftp}), (\ref{eq:rightp}) and (\ref{eq:allp}) take care  that the rows directed through the nodes of the path are consistent with the decision splits. Finally, constraints (\ref{distinct}) enforce that the splits performed  at  internal nodes are distinct. 
Note that constraints (\ref{eq:1h}) and  (\ref{eq:rightp})  ensure that $y_r\leq 1$. Moreover, all the right hand side bounds in (\ref{eq:leftp}) and (\ref{eq:rightp}) are binary. Hence, since the problem above is a maximization problem and $u_{j,a}\in \{0,1\}$, constraints (\ref{y_indicator}) can be relaxed to  $y_r\geq 0$.
\newline
The outcome of each optimization problem $DPP(l,t)$ is a decision path $(s(0),...,s(k-1),t)$. 

\subsubsection{Pricing heuristic: Constructing randomized decision paths} 
\label{subsec:heuristic}

The goal of the pricing heuristic is to  quickly find decision paths with positive reduced costs. We keep a set $C$ of feasible columns (decision paths) of a fixed-size. At each CG iteration, $C$ is updated by randomly constructing a set of columns, estimated to be promising, and by removing a subset columns with low reduced cost values. 

 Let $N_{lf}'$ be the set of leaves for which decision paths with positive reduced cost were found in the previous iterations ($N_{lf}'=N_{lf}$ in the first iteration). We select uniformly at random  $n_{l}$ leaves from $N_{lf}'$ (a leaf can be selected several times). For each selected leaf $l$, we construct a decision path by selecting uniformly at random a split from $S_j$, for each $j\in p_{BT}(l)\cap N_{int}$ by ensuring the selected splits are distinct.
 
Among the columns (decision paths) at hand, the $n_c$  with the highest positive reduced costs are added to the master problem. If the number of columns with positive reduced cost is lower than $n_c$, then all columns with positive reduced costs are added. Finally,  columns with low reduced costs are removed to maintain the fixed size of the set $C$. 

The pricing heuristic is used as long as it delivers  columns with positive reduced costs. If no promising column is found after running the pricing heuristic a given number of times, the  algorithm switches to solving the MILP formulation of the pricing model for checking optimality of the master problem. If the MILP model  finds some promising columns, we remove all columns in set $C$ and adjust the randomized pricing heuristic such that leaves for which decision paths with high (positive) reduced cost are found, have priority to be considered. Otherwise, the master problem is solved optimally.

\section{Selecting the restricted sets of splits} \label{sec:CG procedure}
There are many types of NP-hard problems, especially of the set-partitioning type, where the CG procedure reduces considerably the computation time of the MIP formulations. For example,  CG proved to be very efficient for  vehicle routing (see e.g. \cite{SplietGabor2014}) and workforce assignments (see \cite{Firat16}). \\

However, our preliminary experiments indicated that a CG approach on the full information problem, that is, the problem with  $S_j=\big \{(f, v_f^r\leq \mu)| r\in R , \mu \in \{v^{r}_{f}|r \in R\}\big \}$, has difficulties in finding optimal solutions for the $k$-CTCP in reasonable times.
The main difference between $k$-CTCP and a set-partitioning problem is in the inter-dependency between paths. To define a decision tree, the paths need to agree with each other,  i.e., share the same splits at common internal nodes (see constraints (\ref{eq:consist})).  This set of constraints makes the decision paths highly dependent on each other, explaining why the CG algorithm is less efficient than for problems with a clear set partitioning structure.  Moreover,  to formulate these constraints,  the set of variables $\rho_{j,a} $ are needed.   When all the decision splits are considered, the number of these variables   is of magnitude  $O(|R|2^{k-1})$. The high dependency between decision paths and the extra variables  increase the complexity of the master model of the CTCP considerably. 

To speed up the algorithm, we propose to approximate the value of the $k$-CTCP problem with full information by a restricted version, in which at each node, only a subset of splits $S_j\subseteq S$ is used to define the decision tree.  As a consequence, the optimality guarantee of the trees is lost. 

To find a good restricted set of decision splits
$S_j$ at internal node $j$, we make use of the CART algorithm.  For simplicity, we will call this process \textit{threshold sampling}, despite the fact that we sample from both sets of features and thresholds. \\ 

In the threshold sampling procedure, we run the CART algorithm \cite{sklearn}  on a randomly selected large portion  of the training data, i.e. $\alpha \%$ (line 4, Algorithm \ref{thrslg}) and collect the  decision splits appearing in the obtained  tree (lines 5 and 6, Algorithm \ref{thrslg}). Note that the splits in CART are distinguished by the superscript CART ($S^{CART}$).  This procedure is repeated while a new decision split appears at root node in less than $\tau$ iterations. We then retain the splits in sets $S_j$'s, that are most frequently used at each internal node.  While  it is possible to keep all decision splits at the root node, as their number is small, we only keep a limited number of the decision splits appearing at the  internal nodes of the constructed CART trees.  More precisely, we keep at every internal node $j$, the  $q_j$ splits with the highest frequency  (line 14, Algorithm \ref{thrslg}). Additionally, we also use the CART solution generated with 100\% of the training data. Besides the splits, we also add the decision paths of that CART solution to the master problem which ensures that the obtained tree at the end of the whole CG-based procedure performs at least as well as CART regarding training accuracy. However, such a conclusion cannot be drawn regarding testing accuracy. The stopping rule is based on the observation that the split at the root and at nodes close to the root are the most  decisive in the structure of  the tree. 

\begin{algorithm}[H]
  \caption{Threshold sampling procedure}
  \label{thrslg}
  \begin{algorithmic}[1]
 
    \State 	\textbf{INPUT}: Problem instance described in Section \ref{Sec:Preliminaries},  Parameters $\tau$, $\alpha$, $q_j \in Z_{+}$. \\ Initialize: $S_j=\emptyset$, $w_{(j,a) }=0$, \; $j\in N_{int}$ $ a\in S$; and $i = 0$;
    \While{ $i  < \tau $ }
    \State Randomly select $\alpha \%$ of training data, and use CART to construct  a tree $CART_{temp}$ 
    \State $w_{(j,a)} \leftarrow w_{(j,a)}+1, \; (j,a) \in S^{CART_{temp}},$ // frequency updates
    \State $S_j \leftarrow S_j \cup \{S^{CART_{temp}}(j)\}$, $ j \in N_{int}$, // Update of the Split sets
     \If{$ S^{CART_{temp}}(root) \in S(root)$   }
    \State $i \leftarrow i+1$
    \Else{}
    \State $i \leftarrow 0$ 
    \EndIf  		
    \EndWhile 
    \State Select $100\%$ of training data, and use CART to construct  a tree $CART_{temp}^{100\%}$ 
		
		\State 	\textbf{OUTPUT}: \{The  $q_j$ splits with highest frequency $S_j$\}$\cup \{ S^{CART_{temp}^{100\%}}(j) \}$
  \end{algorithmic}
\end{algorithm}

\section{Finding an integer solution} \label{sec:integersol}

CG is a solution technique to solve LP models, hence the solution obtained when the procedure terminates is fractional in general.  In case indeed a fractional solution is delivered by the CG algorithm, we solve the master ILP model with the columns enumerated so far. In our experiments, the formulation of the master LP model proposed in Section \ref{sec:CG procedure} delivered a high ratio of integer solutions, an indication of the good quality of the LP bound.  Detailed results regarding integrality are listed by Table \ref{Table:Int} in the computational experiments section.

In our CG-based heuristic approach, we obtain an integer solution by converting the master LP model  to an ILP model. Our experiments indicate that a standard solver can find the optimal solution of these ILP models very fast. We may give several reasons for this. Firstly, we construct the decision paths of one complete CART solution that is found with 100\% of the training data during threshold sampling which provides a warm-start to the solver. Secondly, the strong constraint that all decision paths should agree in their decision splits may turn out to be helpful in finding quickly valid cuts, hence pruning the nodes of the search tree. 

As common in the literature, we have experimented by using the CG based approach in a Branch-and Price algorithm. The main difficulty we faced was in choosing good branching rules. In our trials, we were not able to find efficient branching rules, that resulted in trees of a small size for large data sets. As the gap between the lower and upper bound obtained from the CG-based procedure was already small, we chose to settle for the obtained upper bound. However, designing an efficient branch and price is a pending future research.

\section{Computational experiments}\label{Sec:CompExpr}

This section presents the  computational results obtained with our CG based heuristic approach, shortly CGH, and compares them to the results of the recently proposed ILP based classification algorithms in the literature. 

The purpose of our experiments is three-fold. First, we are interested in the performance of the CGH in improving the solution quality of its baseline method and its competitiveness with the optimal classification tree methods recently proposed in the literature. We use the training accuracy of the built classification trees to investigate this. Second, we test the generalization capability of CG on learning classification trees. We compare the testing accuracy in this evaluation. Third, and most importantly, we investigate the scalability of the CGH. To this end, we test the CG algorithm on large datasets, which to the best of our knowledge, have not been attempted by the existing ILP based approaches of learning decision trees.

\subsection{Baseline and benchmark algorithms}

The proposed CGH algorithm uses the relaxation of the master ILP formulation that is solved by decomposing it into pricing subproblems as described in Section \ref{sec:MasterILP} and the threshold sampling described in Section \ref{sec:CG procedure}. The baseline algorithm for CGH is the CART that is available in Scikit-learn, which is a machine learning tool in Python \cite{sklearn}. We ran CART with the default parameters except for the maximum depth, which was set to the corresponding depth of the tree solved by our method. We used two benchmark methods for the evaluation of CGH performance. The first one is the MILP formulation OCT, proposed by \citet{BertDunn17}. 
and the second one is BinOCT* recently proposed by \citet{VerZha19}.

\subsection{Experimentation setting}\label{subsection:Experimentation}
In the pricing heuristic, we use $|C|=500$ for the randomized decision path construction, the number of leaves in the updated procedure is $n_l=200$, and the number of the chosen columns to add to the master problem is  $n_c=100$.  
For the threshold sampling procedure \ref{thrslg}, we  use the following parameter values:  the portion of the data $ \alpha = 90\%$, the number of CART trees is 
$\tau = 300$, and the number of $q_{root} = \floor*{\frac{150}{|N_{int}|}}$, $q_{j} = \floor*{\frac{100}{|N_{int}|}} \; j \in N_{int} \setminus \{root\}$ decision splits are selected. 

\nop{
\begin{center}
\begin{table}[h]
 \centering
 \footnotesize
  \caption{Tuned hyperparameters for CART*} \label{Table:Hyperparams}
  \begin{tabular}{ll }
\hline 
\textbf{Parameter } & \textbf{Range set} \\ 
\hline
Goodness criterion & $\{\mbox{gini, entropy}\}$. \\
Minimum sample requirement & $\{0.02,0.05,0.1,0.2\}$ \\
Class weight & $\{\mbox{None, Balanced}\}$ \\
Minimum segment size at leaves & $\{\mbox{0.01, 0.05, 0.1, 0.2,1}\}$ \\
\hline
\end{tabular}
\end{table}
\end{center}
}

We tested four algorithms using 20 datasets from the UCI repository \cite{UCI}, where 14 are ``small'' datasets containing less than 10000 data rows and 6 are ``large'' ones containing over 10000 data rows. An overview of the used datasets is listed in Table \ref{tab:instances}.

 We use the same experiment settings as in \citet{VerZha19} \footnote{\url{https://github.com/SiccoVerwer/binoct}}. In this setting, a given dataset is split 50\% for training and 25\% for testing, as it was also used by \citet{BertDunn17}, and the split is done randomly five times. A timelimit of 10 minutes is used for solving each instance and the average performance of five experiments for each dataset is reported. Eventually, we take the same training and testing instances used by BinOCT* to run our experiments. We used the algorithms to construct classification trees of depths 2, 3, and 4 and compare their performance in terms of training and testing accuracy. Since the precise setting details of \citet{BertDunn17} were not available, it is not possible to compare CGH and BinOCT* to OCT training accuracies on training instances. Therefore, we decided to make this comparison only on testing instances.

All experiments were conducted on a Windows 10 OS, with 16GB of RAM and an Intel(R) Core(TM) i7-7700HQ CPU @ 2.80 GHz. The code is written in Python 2.7 and the solver used to solve the linear and integer programs is CPLEX V.12.7.1 \cite{CPLEX} with default parameters.

\subsection{Results on small instances} \label{sec:smallres}

In this section we present the results of our computational experiments in small data instances. Figure \ref{fig:AccuSmall} gives an overview of the average prediction accuracy of all benchmark algorithms for increasing 2, 3, and 4. The results are shown for two cases; training data and testing data.
For the training instances, CGH is evaluated using the results of CART and BinOCT* methods as benchmarks. 

On the training instances, all methods have increasing performance with increasing depth values. Both CGH and BinOCT* exceed 90\% average accuracy for depth 4.
On these data, CGH improves on CART by 0.9\%, 2.7\%  and 1.3\% on average, for depths 2, 3, and 4 respectively. However, CGH is outperformed by BinOCT* by 1.3\%, 0.8\% and 1.2\% on average. In overall 42 results, BinOCT*, CGH, and CART have 36, 13, and 5 wins respectively, including ties. For further details for instance-wise results we refer to Table \ref{tab:smalldatatr} in the appendix.

On testing data, all the methods tested have comparable performance. CGH improves the accuracy of CART on trees of depth 2, 3, and 4 by 0.4\%, 1.4\% and 0.7\% respectively.  The average relative improvement in accuracy  of  CGH upon OCT  are 0.3\%, 2.3 \%, and 4.6\% for depths 2, 3, and 4. On trees of depths 2 and 4, CGH is outperformed by BINOCT* by an average accuracy of 1.4\% and 0.2\%, while on trees of depth 3 CGH outperforms BINOCT* by  0.7\%. In overall 42 results, BinOCT*, OCT, CGH, and CART have respectively 16, 11, 10 and 6 wins, including ties. The accuracy of  CGH and BINOCT* for depth 4 is remarkably lower on testing compared to training data, which can be an indication that these two algorithms overfit. For further details for instance-wise results we refer to Tables \ref{tab:test2}
-\ref{tab:test4} in the appendix.

\begin{figure}[H]

\centering

\includegraphics[width=1.0\textwidth]{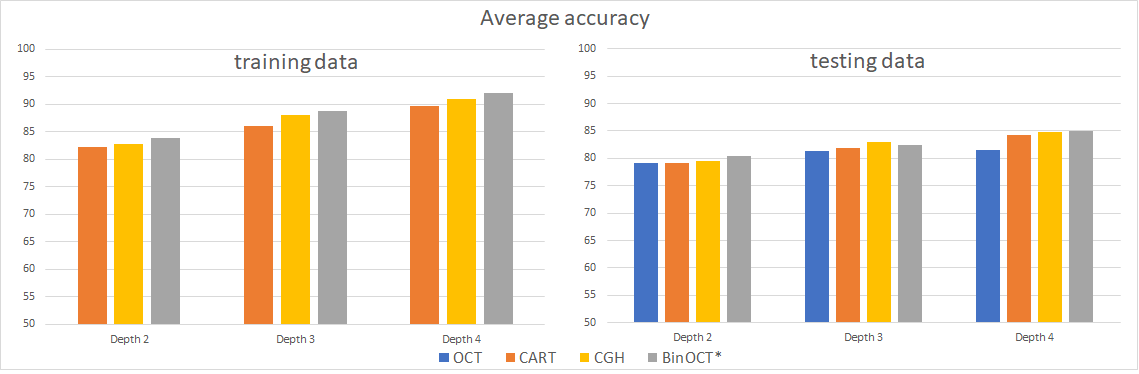}
\caption{Average accuracy (\%) of benchmark algorithms in small data sets}
\label{fig:AccuSmall}

\end{figure}

\subsection{Results on big instances}\label{sec:bigresults}

The MILP-based formulations in the existing literature (e.g., \citet{BertDunn17}, \citet{VerZha17}, \citet{VerZha19}) failed to handle datasets with more than 10000 rows within their predefined time limits. Therefore, for large datasets, we can only compare the results of our CG algorithm with CART and a tuned version of CART, shorly CART*. A more detailed explanation on tuning hyperparameters of CART can be found in \ref{App:CART}.

\begin{table}[H]
\begin{center}
\begin{tabular}{c|c|c|c|c|c|c|c|c|c}
Datasets & CART & CART* & \textit{CGH} & CART & CART* & \textit{CGH} & CART & CART* & \textit{CGH} \\
\hline

 &\multicolumn{3}{l}{ $k= 2$} &  \multicolumn{3}{l}{ $k=3$} &  \multicolumn{3}{l}{ $k=4$}   \\
\hline
Magic4 & 78.4 & 78.4 & \textbf{79.1} & 79.1 & 79.2  &\textbf{80.1} & \textbf{81.5} & \textbf{81.5} & \textbf{81.5}  \\
Default credit &\textbf{ 82.3} & \textbf{82.3} & \textbf{82.3} & \textbf{82.3} & 82.2  & \textbf{82.3} & \textbf{82.3} & 82.2 & \textbf{82.3}  \\
HTRU\_2 & \textbf{97.8} & \textbf{97.8} & \textbf{97.8} & \textbf{97.9} & 97.8  & \textbf{97.9} & \textbf{98.0} & 97.7 & \textbf{98.0}  \\
Letter recognition & 12.5 & \textbf{12.7} & \textbf{12.7} & 17.7 & \textbf{23.3}& 18.6 & 24.8 & \textbf{35.4} & 27.0  \\
Statlog shuttle & \textbf{93.7} & \textbf{93.7} & \textbf{93.7} & 99.6 & 99.5 & \textbf{99.7} & \textbf{99.8} & 99.6 & \textbf{99.8}  \\
Hand-posture & \textbf{56.4} & \textbf{56.4} &\textbf{56.4} & 62.5 & 62.4  & \textbf{62.8} & 69.0 & 69.0 & \textbf{69.1}\\
\end{tabular}
\caption{Testing accuracy of the classification trees for big data instances.}
\label{tab:test-big}
\end{center}
\end{table}

Table \ref{tab:test-big} lists the testing accuracy values of all algorithms for depths 2,3, and 4. The highest accuracy in an instance for every depth value is indicated in bold which is called a win. As seen in Table \ref{tab:test-big}, when the tree is small (i.e., depth 2), CGH slightly outperforms CART and CART*. When learning trees of depths 3 and 4, although CGH is outperformed by CART* in terms of the average accuracy over all big datasets, CGH has more wins than CART*.  

Table \ref{tab:test-big} also shows that CG has the highest number of wins for not only the small but also the large datasets.  
CGH is better than or equal to CART and CART* on 5 out of 6 datasets on constructing trees of depth 3, and its performance on one particular dataset (Letter Recognition, See Table \ref{tab:test-big}) leads to an lower average accuracy than CART*.

Our experiments demonstrate that the proposed CGH method is capable of improving the CART solutions both in small and big data instances.

Despite the large size of the problem, CGH improves more than half of the CART solutions, although the improvements, averaging 0.34\%, are not as significant as on small datasets. 

On two datasets, Default credit and HTRU\_2, CGH could not find improved solutions compared to CART. This may be caused by the structure of the data, that is, these two cases are rather easy as CART can give very good classification results already (more than 82\% for Default credit and more than 97\% for HTRU\_2). 

Compared with CART*, we note small improvements (around 0.3\%) in most of the instances. For the case \textit{Letter recognition} with 26 classes, CART* appears to be much better than CGH on predicting the right classes.

\paragraph{Computation time} An important aspect in constructing trees of a certain depth is the computation time needed. In our experiments, CART only took approximately 0.1s to learn a tree. For OCT, the time limit was set to 30 minutes or to 2 hours depending on the difficulty of the problem (see \citet{BertDunn17}). For BinOCT*, the time limit was 10 minutes. 

On small datasets, the proposed CGH algorithm terminates as soon as one of the following stopping criteria is met: (1) the optimal solution of the master problem has been reached; and (2) a time limit of 10 minutes has been reached. On all the 14 small datasets, CGH terminated before the time limit of 10 minutes is reached. On big datasets, solving the pricing MILPs was extremely time-consuming (several minutes), and therefore an additional stopping criterion is added. If no promising column is found after running the pricing heuristic a given number of times as described in section \ref{subsec:heuristic}, the algorithm stops and does not solve the MILP pricings.

\begin{figure}[H]
\begin{center}

\includegraphics[width=0.8\textwidth]{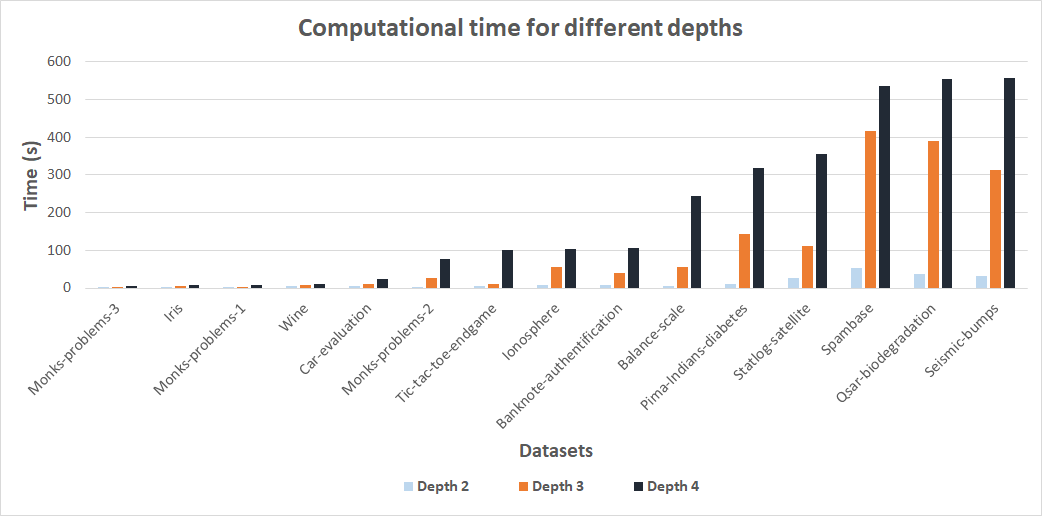}
\caption{Computational time of CGH on small datasets.}
\label{fig:comptime}
\end{center}
\end{figure}

Figure \ref{fig:comptime} shows the required computational time for constructing each tree.
As expected, the algorithm needs more time when the size of the problem (depth, rows, features) grows. Nevertheless, CGH was able to solve the master problem optimally for all instances within 10 minutes. For the smallest instances, CGH terminated in only a few seconds. 

\paragraph{Integrality of LP solutions} An important property of an ILP formulation is its strength, i.e. the quality of its bound, when its relaxation is solved. Table \ref{Table:Int} shows that all LP solutions for depth-2 are integer. In depth 3, 77\% of LP solutions are integers, and in depth 4 we obtain 57\%.

\begin{center}
	\begin{table}[h]
		\centering
		\footnotesize
		\caption{Ratio of integer solutions after CG terminates for small data instances.} \label{Table:Int}
		\begin{tabular}{lll}
			\hline 
			\textbf{Depth } & \textbf{Ratio of Integer solutions} & Percentage (\%)  \\ 
			\hline
			2 & 70/70 & 100 \\
			3 & 54/70 & 77 \\
			4 & 40/70 & 57 \\
			\hline
		\end{tabular}
	\end{table}
\end{center}

The following main conclusions can be drawn based on our computational experiments:

\begin{itemize}
    \item \textit{Competitiveness:} The CGH method is competitive to the state-of-the-are ILP-based methods in the recent literature. In training data of small instances, CGH is outperformed only by BinOCT*. In  testing data instances, CGH outperforms OCT and achieves a comparable performance to BinOCT*.
    
 \item \textit{Strength of the master ILP model}: In small data instances, 78\% of the solutions of the LP relaxation of our master model are integer. This can be seen as an indication for the strength of our master ILP formulation.  
 
 \item \textit{Big data instances:} 
  CGH fairly improves the CART solutions and is competitive to the tuned version of CART.

\end{itemize}

\section{Discussion and conclusion}\label{Sec:conclusion}

Constructing optimal classification decision trees is one of the central problems in Machine Learning. The goal of our study was to design a quick heuristic to find solutions for large instances, while maintaining high solution quality for smaller ones. To achieve this, we propose a novel ILP formulation for constructing classification trees, based on defining decision variables related to decision paths rather then decision splits, as previously done in the literature. A decision path is a sequence of decision splits from root to one of the leaf nodes. Due to the exponential number of decision paths, our master formulation is solved by employing the Column Generation (CG) technique. If the CG does not finish with an integer solution, an ILP is solved based on the columns found so far. Finally, to speed up the solution procedure, we used a restricted set of problem parameters obtained via a sampling procedure, called threshold sampling. The threshold sampling selects decision splits among the most frequent splits selected by CART. 

The computational experiments indicate that the proposed approach is competitive with the state-of-the-art ILP-based algorithms in terms of training and testing accuracy. The strength of our approach consists in its capability of finding good solutions for large instances (with more than 10000 rows) in short computational times, unlike the the ILP-based algorithms previously described in the literature (\citet{BertDunn17}, \citet{VerZha17}, and \citet{VerZha19}).

 In Proposition \ref{prop:pricing}, we give an upper bound for the polynomial complexity of the pricing problem of the proposed CG procedure. The  problem of finding an efficient algorithm of the pricing problem remains open. This algorithm may be a practical alternative for solving the MILP model of the pricing problem. 

In this paper we focused mainly on  a math-heuristic approach. A natural further research direction is the design of an efficient Branch-and-Price (BP) algorithm to solve large instances in reasonable time limits. Developing an efficient BP algorithm with strong branching rules, valid cuts, pricing algorithm with improved comlplexity, and column management will extend the interplay of Optimization and Machine Learning research.
 
As any of the ILP-based approaches, we can incorporate other learning objectives than  accuracy into our algorithm, as demonstrated in \citet{VerZha17}. For instance, our approach can be used to build a classification tree such that the false negatives in medical data is minimized. It would be interesting to study how our approach behaves with these different objectives.


\newpage

\appendix
\section*{Appendix}

\section{Hyperparameter tuning of CART}\label{app:CARTStar}
\subsection{Hyper parameter tuning procedure of CART}\label{App:CART}

CART* that is a tuned version of CART. CART* uses the best-performing hyperparameters after conducting a test on them. 
Table \ref{Table:CartHyperparams} lists the big data instances and we only performed the following pre-processing step, (1) transforming classes to integers, (2) transforming nominal string features into 0/1 features using one-hot-encoding, and (3) transforming meaningful ranked (ordinal) string features into numerical features (for instance $\{low,medium,high\}$ becomes $\{0,1,2\}$).

\begin{center}
\begin{table}[ht]
 \centering
 \footnotesize
  \caption{Tuned hyperparameters for CART*} \label{Table:CartHyperparams}
  \begin{tabular}{ll }
\hline 
\textbf{Parameter } & \textbf{Range set} \\ 
\hline
Goodness criterion & $\{\mbox{gini, entropy}\}$. \\
Minimum sample requirement & $\{0.02,0.05,0.1,0.2\}$ \\
Class weight & $\{\mbox{None, Balanced}\}$ \\
Minimum segment size at leaves & $\{\mbox{0.01, 0.05, 0.1, 0.2,1}\}$ \\
\hline
\end{tabular}
\end{table}
\end{center} 

As listed in Table \ref{Table:CartHyperparams}, the hyperparameter tuning of CART* considers 80 possible combinations of the following parameters: (i) the minimum sample requirement, ranging from 0.02, 0.05, 0.1, to 0.2, and the minimum segment size at leaves, ranging from 0.01, 0.05, 0.1, 0.2, to 1. (ii) the performance metric used to determine the best splits, including gini and entropy; and (iii) the weights given to different classes. 
The ``Balanced'' option from Scikit-learn balances classes by assigning different weights to data samples based on the sizes of their corresponding classes.  The ``None'' option does not assign any weights to data samples. 
All these options are explored by performing an exhaustive search with a 10-fold cross validation on training data.

\section{Detailed results}
The following tables refer to the average accuracy on testing. For CART* and CG, the computational time is also provided (bracketed).

\begin{table}[H]
\begin{center}
\begin{tabular}{l|c|c|c}
Dataset & $|R|$ & $|F|$ & $|T|$  \\
\hline
\multicolumn{4}{l}{\textit{Small instances} }\\
\hline
Balance-scale & 625& 4 & 3  \\
Banknote-authentification & 1372 & 4 & 2 \\
Car-evaluation & 1728 & 5 & 4 \\
Ionosphere & 351 & 34 & 2 \\
Iris & 150 & 4 & 2 \\
Monks-problems-1 & 124 & 6 & 2\\
Monks-problems-2 & 169 & 6 & 2  \\
Monks-problems-3 & 122 & 6 & 2  \\
Pima-Indians-diabetes & 768 & 8 & 2 \\
Seismic-bumps & 2584 & 18 & 2  \\
Spambase & 4601 & 57 & 2 \\
Statlog-project-landsat-satellite & 4435 & 36 & 6 \\
Tic-tac-toe & 958 & 18 & 2 \\
Wine & 178 & 13 & 3 \\
\hline
\multicolumn{4}{l}{\textit{Large instances} }\\
\hline
Default credit & 30000 & 23 & 2  \\
Hand-posture & 78095 & 33 & 5 \\
HTRU\_2 & 17898 & 8 & 2  \\
Letter recognition & 20000 & 16 & 26 \\
Magic4 & 19020 & 10 & 2  \\
Statlog shuttle & 43500 & 9 & 7 \\
\hline

\end{tabular}
\caption{Dataset instances used in the experiments}
\label{tab:instances}
\end{center}
\end{table}

\begin{table}[H]
\begin{center}
\begin{tabular}{l|ccc|ccc|ccc}

Dataset & CART & BinOCT* &  CGH & CART & BinOCT* & CGH & CART & BinOCT* & CGH   \\
\hline
 &\multicolumn{3}{l}{ $k= 2$} &  \multicolumn{3}{l}{ $k=3$} &  \multicolumn{3}{l}{ $k=4$}  \\
 \hline
Balance-scale & 71.2 & \textbf{73.3} & 73.2  & 76.5 & 78.7 &\textbf{ 78.9} & 82.9 & 84.1 & \textbf{84.4} \\
Banknote-auth. & 91.7 & \textbf{93.4} & 92.0  & 94.6  & \textbf{97.7} & 96.0  & 97.4 & \textbf{99.7} & 97.7 \\
Car-evaluation & \textbf{76.9} & \textbf{76.9} & \textbf{76.9} & 79.0 & \textbf{80.4} & 79.8  & 84.2 & \textbf{85.7} & 85.2 \\
Ionosphere & 91.0 & \textbf{91.2} & 91.1  & 93.8 & \textbf{94.9} &94.4 & 96.0 & \textbf{97.1} & 96.1 \\
Iris &\textbf{ 96.8} & \textbf{96.8} & \textbf{96.8}  & 98.1 & \textbf{100}& 98.9  & \textbf{100} & \textbf{100} & \textbf{100}\\
Monks-prob.-1 & 76.8 & \textbf{83.5} & 77.1  & 81.6 & \textbf{92.6} & 90.3  & 86.1 & \textbf{99.4} & 93.5\\
Monks-prob.-2 & 65.2 & \textbf{69.8} & 67.1  & 70.0 & \textbf{79.5} & 78.5  & 79.8 & \textbf{86.9} & 81.4 \\
Monks-prob.-3 & \textbf{93.8} & \textbf{93.8} & \textbf{93.8} & 94.8 & \textbf{95.7} & 95.1  & 95.7 & \textbf{98.0} & 95.7\\
PI-diabetes & 77.3 & \textbf{79.3} & 78.8  & 78.9 & \textbf{81.3} & 81.2  & 82.9 & \textbf{84.7} & 84.2\\
Seismic-bumps & 93.1 & \textbf{93.4} & 93.3  & 93.4 & \textbf{93.7} & \textbf{93.7}  & 93.9 & \textbf{94.2} & \textbf{94.2} \\
Spambase & 86 & 86.7 & \textbf{87.1}  & 89.6 & 90.2 & \textbf{90.3} & 91.6 & \textbf{91.9} & 91.6\\
Statlog-sat. & 63.2 & \textbf{66.7} & 64.0  & 78.7 & \textbf{80.5} & 79.5 & 81.6 & 81.6 & \textbf{82.9}\\
Tic-tac-toe & 71.2 & \textbf{72.1} & 71.8 & 75.4 & \textbf{77.6} & 76.7  & 84.4 & 85.3 & \textbf{85.4}\\
Wine & 95.7 &\textbf{ 97.3} & 96.6  & 99.3 & \textbf{100} & 99.6  & \textbf{100} &\textbf{100} & \textbf{100}\\
\hline
\end{tabular}
\caption{Results on training, small datasets, for depths 2,3, and 4 }
\label{tab:smalldatatr}
\end{center}
\end{table}

\begin{table}[H]
\begin{center}
\begin{tabular}{l|ccccc}
Dataset & CART & CART(BD)* & OCT & BinOCT* &  CGH  \\
\hline
Balance-scale & 67.5 & 64.5 & 67.1 & \textbf{69.3} & 69.2 \\
Banknote-auth. & 90.6 & 89.0 & 90.1 & \textbf{91.7} & 91.4 \\
Car-evaluation & \textbf{77.8} & 73.7 & 73.7 & \textbf{77.8} & \textbf{77.8} \\
Ionosphere & 87.7 & \textbf{87.8} & \textbf{87.8} & 87.7 & 86.4 \\
Iris & \textbf{95.8} & 92.4 & 92.4 & \textbf{95.8} & \textbf{95.8} \\
Monks-prob.-1 &  68.4 & 57.4 & 67.7 & \textbf{80.0} & 71.0\\
Monks-prob.-2 & 54.0 &\textbf{ 60.9} & 60.0 & 54.4 & 52.6\\
Monks-prob.-3 & 93.5 &\textbf{ 94.2} & \textbf{94.2} & 93.5 & 93.5 \\
PI-diabetes & 74.7 & 71.9 & 72.9 & 73.1 &\textbf{ 75.2 }\\
Seismic-bumps & \textbf{94.0} & 93.3 & 93.3 & 93.8 & 93.6 \\
Spambase & 85.4 & 84.2 & 84.3 & 85.7 & \textbf{86.5} \\
Statlog-sat. & 63.4 & 63.2 & 63.2 & \textbf{65.7} & 63.9\\
Tic-tac-toe & 67.2 & 68.5 &  \textbf{69.6} & 67.3 & 67.7 \\
Wine & 88.0 & 81.3 & \textbf{91.6} & 91.1 & 87.6 \\
\hline
\multicolumn{6}{l}{* results reported by \citet{BertDunn17}}. 
\end{tabular}
\caption{Results on testing, small datasets, for depth 2 }
\label{tab:test2}
\end{center}
\end{table}

\begin{table}[H]
\begin{center}
\begin{tabular}{l|ccccc}
Dataset & CART & CART(BD)* & OCT & BinOCT* &  CGH  \\
\hline
Balance-scale & 70.6 & 70.4 & 68.9 & 71.3 & \textbf{72.4} \\
Banknote-auth. & 93.6 & 89.0 & 89.6 & \textbf{96.6} & 95.2 \\
Car-evaluation & 78.9 & 77.4 & 77.4 & \textbf{80.4} & 79.0 \\
Ionosphere & 86.4 & \textbf{87.8} & 87.6 & 85.5 & 86.4 \\
Iris & 95.8 & 92.4 & 93.5 & \textbf{97.9} & 95.4  \\
Monks-prob.-1 &  76.8 & 65.8 & 74.2 & 80.0 & \textbf{92.2} \\
Monks-prob.-2 & 56.7 & \textbf{60.9} & 60.0 & 55.3 & 56.3\\
Monks-prob.-3 & 92.3 & \textbf{94.2} & \textbf{94.2} & 89.7 & 90.3 \\
PI-diabetes & 73.3 & 70.6 & 71.1 & 74.4 & \textbf{75.6}  \\
Seismic-bumps & 93.1 & \textbf{93.3} & \textbf{93.3} & 92.8 & 92.9 \\
Spambase & 88.5 & 86.0 & 86.0 & \textbf{88.9} & 88.8 \\
Statlog-sat. & 77.3 & 77.7 & 77.9 & \textbf{79.2} & 77.7\\
Tic-tac-toe & 73.8 & 73.1 & \textbf{74.1} & 70.6 & 71.3 \\
Wine & 88.0 & 80.9 & \textbf{94.2} & 92.0 & 88.0  \\
\hline
\multicolumn{6}{l}{* results reported by \citet{BertDunn17}}. 
\end{tabular}
\caption{Results on testing, small datasets, for depth 3 }
\label{tab:test3}
\end{center}
\end{table}

\begin{table}[H]
\begin{center}
\begin{tabular}{l|ccccc}
Dataset & CART & CART(BD)* & OCT & BinOCT* &  CGH  \\
\hline
Balance-scale & 77.5 & 73.4 & 71.6 & 78.9 & \textbf{79.1} \\
Banknote-auth. & 95.8 & 89.0 & 90.7 & \textbf{98.1} & 95.7 \\
Car-evaluation & 84.8 & 78.8 & 78.8 & \textbf{86.5} & 86.0 \\
Ionosphere & 87.5 & 87.8 & 87.6 & \textbf{88.6} & 85.7 \\
Iris & 97.9 & 92.4 & 93.5 & \textbf{98.4} & 97.9  \\
Monks-prob.-1 &  74.2 & 68.4 & 74.2 & \textbf{87.1} & 84.5 \\
Monks-prob.-2 & \textbf{63.6} & 62.8 & 54.0 & 63.3 & 62.3\\
Monks-prob.-3 & 93.5 & \textbf{94.2} & \textbf{94.2} & 84.5 & 89.0 \\
PI-diabetes & 73.9 & 71.1 & 72.4 & 73.0 & \textbf{75.1}  \\
Seismic-bumps & 92.6 &\textbf{ 93.3} & \textbf{93.3} & 92.6 & 92.5  \\
Spambase & \textbf{89.7} & 86.0 & 86.1 & 89.5 & 89.6 \\
Statlog-sat. & 79.9 & 78.2 & 78.0 & 79.9 & \textbf{80.0}\\
Tic-tac-toe &\textbf{80.1} & 74.2 & 73.3 & 78.8 & 79.3 \\
Wine & 88.9 & 80.9 & \textbf{94.2} & 89.8 & 89.8  \\
\hline
\multicolumn{6}{l}{* results reported by \citet{BertDunn17}}. 
\end{tabular}
\caption{Results on testing, small datasets, for depth 4 }
\label{tab:test4}
\end{center}
\end{table}

\section{Complexity of DPP}\label{app:Complexity}

In the following we study the complexity of a special case of the pricing problem, in which all dual values $\alpha$'s and $\gamma$'s  are set to 0, and all the $\beta$ variables are set to  $-1$. 
We call this special case the ``Decision pricing problem (DPP)''.  

\begin{quote}
	{\sc Problem:} {\sc Decision Path Pricing problem (DPP) }
	\\[1.0ex]
	{\sc Instance:} A binary tree $BT$, a set of data rows $R$,  a set of features $F$, a leaf node $l$,  a set of splits  $S(j)$  for every $j$ in $p_{BT}(l)$. A real number $b$.
	\\[1.0ex]
	
	{\sc Question:}
	Does there exist a decision path $p$ in $DP_{l}$ such that  $\overline{\mbox{CP}}(p) \geq b$, where $\overline{\mbox{CP}}(p)$ is given by (\ref{redcost})?
\end{quote}

\begin{theorem} The (DPP) problem for arbitrary depth $k$ is strongly NP-hard. 
\end{theorem}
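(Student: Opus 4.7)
The plan is to reduce from the Densest-$m$-Subgraph problem (\textsc{D$m$S}): given a graph $G=(V,E)$ with $|V|=n$ and integers $m,b$, decide whether there exists $S\subseteq V$ with $|S|=m$ such that the induced subgraph $G[S]$ contains at least $b$ edges. \textsc{D$m$S} generalizes \textsc{Clique} (take $b=\binom{m}{2}$) and is therefore strongly NP-hard; the reduction below only produces $0/1$ feature values and integer thresholds of polynomial magnitude, so strong NP-hardness will transfer to (DPP).

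Given an instance $(G,m,b)$ of \textsc{D$m$S}, I would construct the following (DPP) instance. Set the tree depth to $k:=n-m$ and let $l$ be a leaf such that at every internal node on $p_{BT}(l)$ the leaf $l$ sits in the right (FALSE) subtree---for example, the rightmost leaf of a right-skewed binary tree of depth $k$. Use a single target class so that the label of $l$ equals the target of every row and consequently $\mbox{CP}(p)=|R^l(p)|$ for every $p\in DP_l$. For every edge $e\in E$ create one row $r_e$; for every vertex $v\in V$ introduce one feature $f_v$ and set $v_{f_v}^{r_e}=0$ if $v\in e$ and $v_{f_v}^{r_e}=1$ otherwise. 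At every node $j$ on $p_{BT}(l)$ let the candidate-split set be $S(j)=\{(f_v,\,v_{f_v}\le 0):v\in V\}$; note that $\mu=0$ is a legal threshold because it actually appears as a row value.

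With this encoding, the split $(f_{v_i},\,v_{f_{v_i}}\le 0)$ chosen at $j_i$ sends $r_e$ to the left (TRUE branch) iff $v_i\in e$, so $r_e$ reaches $l$ iff $v_i\notin e$ for every $i$. Condition (ii) of Definition~\ref{def:decpath} forces the $v_i$'s to be distinct, so a decision path $p\in DP_l$ corresponds bijectively to a $k$-element vertex set $W=\{v_1,\dots,v_k\}\subseteq V$, and $R^l(p)=\{r_e:e\subseteq V\setminus W\}$ consists of exactly the edges of the induced subgraph on the $m$-element complement $V\setminus W$. Since all rows share the label of $l$ and $\alpha=\gamma=0$, $\beta_r=-1$, we obtain
\[
\overline{\mbox{CP}}(p)=\mbox{CP}(p)+|R^l(p)|=2\,|E(G[V\setminus W])|.
\]
Setting the (DPP) threshold to $b':=2b$, the (DPP) admits a decision path with $\overline{\mbox{CP}}(p)\ge b'$ iff $G$ has an $m$-vertex induced subgraph with at least $b$ edges, i.e.\ iff the \textsc{D$m$S} instance is a YES-instance.

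The main obstacle I anticipate is preventing the auxiliary $\mbox{CP}(p)$ term in the reduced cost from breaking the correspondence between paths and vertex subsets; using a single target class aligns $\mbox{CP}(p)$ with $|R^l(p)|$ and resolves this cleanly, only doubling the threshold. A secondary technicality is that $k$ must be allowed to grow with the instance, otherwise Proposition~\ref{prop:pricing} would yield a polynomial algorithm; this is fine because $k=n-m$ is part of the \textsc{D$m$S} input and grows with $n$. All numeric parameters of the constructed instance are $0$, $1$, or at most $\binom{n}{2}$, so the reduction is polynomial in the unary size of the input and therefore establishes strong NP-hardness of (DPP).
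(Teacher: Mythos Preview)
Your reduction is correct. You reduce from \textsc{Densest-$m$-Subgraph} (equivalently \textsc{Clique}), whereas the paper reduces from \textsc{Exact Cover by 3-Sets}. Concretely, the paper creates one row per candidate $3$-set, one feature per row, places $l$ at the \emph{leftmost} leaf, sets depth $k=q$ (where $|X|=3q$), and argues that a decision path routing $q$ rows to $l$ exists iff an exact cover exists. Your construction is in a sense dual: rows are edges, features are vertices, $l$ is the \emph{rightmost} leaf, and the $k=n-m$ distinct splits pick the vertices to \emph{discard}, so the rows surviving to $l$ are exactly the edges of the induced subgraph on the complement. Your treatment of the $\mbox{CP}(p)$ term is cleaner than the paper's: by using a single target class you force $\mbox{CP}(p)=|R^l(p)|$ and absorb the factor~$2$ into the threshold $b'$, whereas the paper leaves the target assignment implicit. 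Both reductions are polynomial with numerical data bounded by a polynomial, so both yield strong NP-hardness; neither approach is obviously more elementary, though yours generalizes immediately to weighted edges (just weight the rows) while the $3XC$ route does not.

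One small inaccuracy: the correspondence between decision paths $p\in DP_l$ and $k$-element vertex sets $W$ is not a bijection but a $k!$-to-$1$ surjection, since permuting the splits across the $k$ internal nodes yields distinct decision paths with the same $R^l(p)$. This is harmless for the argument because $\overline{\mbox{CP}}(p)$ depends only on the underlying set $W$, but you should say ``corresponds to'' rather than ``corresponds bijectively to''.
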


\begin{proof}
	The proof uses a reduction from Exact Cover by 3-Sets(3XC) to DPP. 3XC is a well-known NP-complete problem in the strong sense \cite{NPcomp}.
	\newline
	
	\textit{Exact Cover by 3-Sets:} Given a set $X=\{1,\dots,3q\}$ and a collection $\mathcal{C}$ of 3-element subsets of $X$, does there exist a subset $\mathcal{C'}$ of $\mathcal{C}$ where every element of $X$ occurs in exactly one member of $\mathcal{C'}$?\\
	
	Given an instance $I$ of the $3XC$ problem, we now present a polynomial time transformation to an instance $I'$ of the DPP problem. By the definition of a decision path, all the splits at internal nodes have to be distinct.
	
	\begin{itemize}
		\item \textit{Rows and compatibility:} For every element in $\mathcal{C}$ we create a distinct row, so $|R|=|\mathcal{C}|$. We say that two rows $r$ and $r'$ are \textit{compatible} if the corresponding elements in $\mathcal{C}$ are disjoint, and it is denoted by $r \propto r'$.
		\item \textit{Features and feature values:} For every row $r$ in $R$, we define a distinct feature $f_{r}$.  This implies $|F| = |R|$. For each row $r$, the value of a feature $f_{r'}$ is defined as
		
		\[ v_{r}^{f_{r'}}= \begin{cases} 
		0.5 & r \propto r' \mbox{ or } r=r'  \\
		1 & r \not \propto r'
		\end{cases}, \quad r,r' \in R
		\]
		
		\item \textit{Leaf, depth, splits:} Consider a binary tree $BT$ of depth $q$. Let $p_{BT}(l)=(n(0),..., n(q-1),l)$ be the path in $BT$  where $n(h)$ is the left child of $n(h-1)$, for $h\in \{1,..., q-1\}$ and $n(0)$ is the root node. Note that  $|Sq_{l}| = q$ (recall $|X|=3q$). At every node $ j \in p_{BT}(l)\cap N_{int}$, define the set of splits $S(j)=\{(f_r, v^{f_{r}}\leq 0.5): r \in R \}$.    
		
		\item Choose $b=q$. In the considered special case, $\overline{\mbox{CP}}(p)$ actually counts the number of rows reaching $l$ when the decision path is $p$. Consequently, $\max\limits_{p \in DP_l}\overline{\mbox{CP}}(p)$ cannot be greater than $q$, which is equal to the highest number of compatible rows. Hence, the question can be reformulated as ``\textit{Does there exist a decision path $p$ that directs exactly $q$ rows to the leaf node $l$?}''.
	\end{itemize}
	
	Let $I$ be a YES instance for 3XC. Now let $R^{\mathcal{C'}}$ denote the set of rows corresponding to the elements in $\mathcal{C'}$. Note that since the elements of  $\mathcal{C'}$ are disjoint, these rows are compatible. Next select at each  internal node $j\in  p_{BT}(l)\cap N_{int}$  exactly one  split  $(f_{r},v^{f_{r}}\leq 0.5)$  for every $r$ in $R^{\mathcal{C'}}$. Observe that each row  $r$ in $R^{\mathcal{C'}}$ appears in  exactly $q$ splits  $\{(f_{r'},v^{f_{r'}}\leq 0.5) : r'\in R^{\mathcal{C'}}\}$, implying that  either $r \propto r'$ or $r=r'$. Moreover, $r$ is directed left at all internal nodes due to feature values $v_{r}^{f_{r'}}=0.5$ and therefore reaches leaf $l$. The decision path constructed in this way  is a YES instance to the decision version of DPP.  The other direction is trivial, since the subsets of $\mathcal{C}$ corresponding to the $q$ rows that reach leaf $l$  give an exact cover  for the $3XC$ instance $I$.

\end{proof}

\end{document}